\crefname{equation}{Eq.}{Eqs.}
\crefname{theorem}{Theorem}{Theorems}
\Crefname{theorem}{Theorem}{Theorems}
\theoremstyle{plain}
\newtheorem{theorem}{Theorem}[section]
\theoremstyle{definition}
\theoremstyle{remark}
\icmltitlerunning{Is DPO Superior to PPO for LLM Alignment? A Comprehensive Study}
\begin{document}

\twocolumn[
\icmltitle{Is DPO Superior to PPO for LLM Alignment? A Comprehensive Study}



\icmlsetsymbol{equal}{*}

\begin{icmlauthorlist}
\icmlauthor{Shusheng Xu}{Tsinghua}
\icmlauthor{Wei Fu}{Tsinghua}
\icmlauthor{Jiaxuan Gao}{Tsinghua}
\icmlauthor{Wenjie Ye}{openpsi}
\icmlauthor{Weilin Liu}{openpsi}\\
\icmlauthor{Zhiyu Mei}{Tsinghua}
\icmlauthor{Guangju Wang}{openpsi}
\icmlauthor{Chao Yu}{equal,Tsinghua}
\icmlauthor{Yi Wu}{equal,Tsinghua,openpsi,sqz}
\end{icmlauthorlist}

\icmlaffiliation{Tsinghua}{Tsinghua University, Beijing China}
\icmlaffiliation{openpsi}{OpenPsi Inc.}
\icmlaffiliation{sqz}{Shanghai Qi Zhi Institute, Shanghai, China}

\icmlcorrespondingauthor{Shusheng Xu}{xssstory@gmail.com}
\icmlcorrespondingauthor{Chao Yu}{zoeyuchao@gmail.com}
\icmlcorrespondingauthor{Yi Wu}{jxwuyi@gmail.com}

\icmlkeywords{Machine Learning, ICML}

\vskip 0.3in
]



\printAffiliationsAndNotice{\textsuperscript{*}Co-corresponding authors.}

\renewcommand{\emph}[1]{\textit{#1}}

\begin{abstract}
Reinforcement Learning from Human Feedback (RLHF) is currently the most widely used method to align large language models (LLMs) with human preferences. 
Existing RLHF methods can be roughly categorized as either \emph{reward-based} or \emph{reward-free}. 
Novel applications such as ChatGPT and Claude leverage \emph{reward-based} methods that first learn a reward model and apply actor-critic algorithms, such as Proximal Policy Optimization (PPO). 
However, in academic benchmarks, the state-of-the-art results are often achieved via \emph{reward-free} methods, such as Direct Preference Optimization (DPO).
\emph{Is DPO truly superior to PPO?} 
\emph{
Why does PPO perform poorly on these benchmarks?
}
In this paper, we first conduct both theoretical and empirical studies on the algorithmic properties of DPO and show that DPO may have fundamental limitations. 
Moreover, we also comprehensively examine PPO and reveal the key factors for the best performances of PPO in fine-tuning LLMs. Finally, we benchmark DPO and PPO across a collection of RLHF testbeds, ranging from dialogue to code generation. Experiment results demonstrate that PPO is able to surpass other alignment methods in all cases and achieve state-of-the-art results in challenging code competitions. Our code is publicly available at \url{https://github.com/openpsi-project/ReaLHF}.
\end{abstract}
\section{Introduction}
\label{intro}


Large Language Models (LLMs) derive their extensive language patterns and knowledge through pre-training on substantial textual datasets~\cite{gpt3,gpt4,llama2,palm,palm2}. To leverage the formidable capabilities of LLMs in practical applications, a growing amount of research has underscored the importance of aligning these models with human preferences~\cite{llm-know-halluci,llm-know-llm,distract-llm,llm-bias1,llm-bias2}. Various methods have been developed for fine-tuning LLMs, with popular approaches including Supervised Fine-Tuning (SFT)~\cite{instrft-gp4} and Reinforcement Learning from Human Feedback (RLHF)~\cite{llm-rlhf-openai-1,llm-rlhf-openai-2,instrgpt}. Typically, fine-tuning involves two phases: SFT to establish a base model, followed by RLHF for enhanced performance. SFT involves imitating high-quality demonstration data, while RLHF refines LLMs through preference feedback.

Within RLHF, two prominent approaches are \emph{reward-based} and \emph{reward-free} methods. Reward-based methods, pioneered by OpenAI~\cite{instrgpt,llm-rlhf-openai-1,llm-rlhf-openai-2}, construct a reward model using preference data and then employ actor-critic algorithms like Proximal Policy Optimization (PPO) to optimize the reward signal. In contrast, reward-free methods, including Direct Preference Optimization (DPO)~\cite{dpo}, RRHF~\cite{rrhf}, and PRO~\cite{pro}, eliminate the explicit use of a reward function. DPO, a representative reward-free method, expresses the reward function in a logarithmic form of the policy and focuses solely on policy optimization.

Notably, the most successful applications like ChatGPT~\cite{chatgpt} and Claude~\cite{claude} are produced by the reward-based RLHF method PPO, while strong performances in academic benchmarks often result from the reward-free RLHF method DPO~\cite{dpo, mistral7b}. This discrepancy raises two fundamental questions: 1) \emph{Is DPO truly superior to PPO in the RLHF domain?} and 2) \emph{Can the performance of PPO be substantially improved in common RLHF benchmarks?} In this paper, we delve into these questions. Through theoretical and empirical analysis, we uncover the fundamental limitations of DPO and explore critical factors that enhance the practical performance of PPO in RLHF. 

First, our theoretical examination reveals that 
DPO might find biased solutions that exploit out-of-distribution responses.
Empirically, we demonstrate that the performance of DPO is significantly affected by the distribution shift between the model outputs and the preference dataset.
Second, we perform ablation studies on the algorithmic components of PPO and discover a collection of critical factors for PPO's best RLHF performances, including advantage normalization, large batch size, and exponential moving average update for the reference model. Finally, we validate our findings through extensive experiments, including dialogue generation tasks and more challenging code generation tasks. These experiments feature diverse feedback types and difficulty levels.
The results indicate that PPO consistently outperforms DPO across all experiments.
Particularly, in the most challenging code competition tasks, PPO achieves state-of-the-art results. Specifically, on the CodeContest dataset~\cite{li2022competition}, our PPO model with 34B parameters outperforms AlphaCode-41B~\cite{li2022competition}, exhibiting a 10@1k improvement from 16.4\% to 22.4\%.
\section{Related Work}
\label{related}

Large language models (LLMs) trained on large datasets acquire surprising capabilities~\cite{gpt3,gpt4,llama2,palm,palm2,scalinglaw,gpt3}.
To leverage these capabilities to real applications, pre-trained LLM is further fine-tuned on specific tasks~\cite{gpt2,flan-t5,ul2}. Through fine-tuning with popular approaches such as SFT and RLHF, LLMs demonstrate impressive performance on established benchmarks~\cite{llama2,gpt4}, aligning further with human preferences and societal well-being~\cite{ai-modern-book,humancompat}.

This paper concentrates on RLHF methods, which can be broadly categorized into reward-based and reward-free approaches. 
Reward-based methods entail training a reward model on preference data in an initial phase~\cite{rwdscaling,llm-rlhf-openai-1,llm-rlhf-openai-2,instrgpt}. Subsequently, this learned reward model is utilized to provide a reward signal for online Reinforcement Learning (RL) algorithms such as PPO~\cite{ppo}. 
There exist previous works that have studied these methods through hyper-parameter tuning and analyzed the effects of the quality reward model quality~\cite{fudan-rlhf,challenge-rlhf}. 
In contrast, reward-free methods offer a simpler training procedure by directly training LLMs on preference data or ranking data to distill human preference~\cite{rrhf,dm-reject-sampling,llama2,dpo,pro,RAFT,hong2024reference}. 
Among these reward-free methods, DPO~\cite{dpo} has demonstrated strong performances and become popular in the community~\cite{mistral7b,chen2024self,selfrwd-llm}. Recent work discussed the performance gap of DPO and PPO on synthetic contextual bandits~\cite{li2023policy}.
In this paper, We analyze the limitations of DPO theoretically and empirically, and explore the key factors for PPO training.

Concurrent efforts have been undertaken to avoid reward model overoptimization~\cite{rw-warm}, facilitate alignment data generation~\cite{rlaif,rlcd}, and implement resource-efficient RLHF systems~\cite{dschat,lora-ppo}.
These works complement our study and can be seamlessly integrated into our implementation.
Previous works have explored the implementation details of PPO for LLMs~\cite{fudan-rlhf,nlpo}. Our paper extends its investigations with additional RLHF techniques, optimizing PPO performance to surpass its reward-free counterpart, DPO. 
Our work is also closely related to studies on algorithm implementation in the RL community~\cite{ppo-impl,whatmattersonpolicy,mappo}. However, our findings provide further insights into fine-tuning LLMs with a model size of up to 34B parameters.

\section{Preliminary}
\label{prelim}

\newcommand{\x}{\mathbf{x}}
\newcommand{\y}{\mathbf{y}}
\newcommand{\E}[2]{\mathbb{E}_{#1}\left[{#2}\right]}
\newcommand{\piref}{\pi_\mathrm{ref}}
\newcommand{\prefdata}{\mathcal D}

\textbf{Language Model.} We consider an LLM as a policy $\pi_\theta(\y\mid\x)$ parameterized by $\theta$. $\pi_\theta$ is designed to follow user instructions $\x\in\mathcal{X}$  to generate a text response $\y\in\mathcal{Y}$. We only consider single-round conversations to simplify notations. Given a prompt $\x$, the LLM $\pi_\theta$ will generate response $\y$ in an auto-regressive manner:
\begin{equation}
\pi_\theta\left(\y\mid \x\right)=\prod_t \pi_\theta \left(y_t\mid \x, \y_{<t}\right),
\end{equation}
where $y_t$ is the $t$-th token in the response and $\y_{<t}$ is tokens in the response before $y_t$.

\textbf{SFT.} As an initial phase of alignment, the pre-trained model is enforced to imitate high-quality demonstration data (dialogue, summarization, etc.), which is usually referred to as Supervised Fine-Tuning (SFT).

\textbf{RLHF.} To further align the SFT model $\pi_\theta$ with human preference, prior works~\cite{llm-rlhf-openai-1,instrgpt} proposed the Reinforcement Learning from Human Feedback (RLHF) procedure, which maximizes the following objective,
\begin{equation}
    J_r(\pi_\theta)=\E{\x\sim p_\mathrm{data},\y\sim\pi_\theta}{r(\x,\y)-\beta\log\frac{\pi_\theta(\y\mid\x)}{\pi_\mathrm{ref}(\y\mid\x)}}.\label{eq:obj}
\end{equation}
where $r$ is the reward function reflecting human preferences. $r$ takes a prompt and the corresponding response as input and outputs a scalar value. $\pi_\mathrm{ref}$ is the reference model used for regularizing $\pi_\theta$ with Kullback–Leibler divergence. $\beta$ is a constant to control the degree of regularization.

In the rest of this section, we will introduce two representative algorithms to optimize \cref{eq:obj}: a reward-based approach, PPO, and a reward-free approach, DPO.

\textbf{PPO.} We can directly adopt standard reinforcement learning methods for \cref{eq:obj}. In this paper, we chose PPO as the training algorithm.
 When $r$ is unknown, a reward model $r_\phi\in R$ is first learned from human-labeled data to approximate $r$.
A common practice is to collect a dataset of preference pairs $\prefdata =\{(\x,\y_w,\y_l)\}$. $\y_w$ and $\y_l$ are responses to $\x$ and marked as ``win'' and ``lose'' by human respectively.
The distribution of the preference dataset is assumed to follow the Bradley-Terry model~\cite{bradley1952rank,rlpref-nips2017}, i.e., the probability of response $\y_w$ is better than $\y_l$ is given by
\begin{align}
    \mathbb{P}_\phi(\y_w\succ\y_l\mid\x)&=\frac{\exp\left(r_\phi(\x,\y_w)\right)}{
    \exp\left(r_\phi(\x,\y_w)\right) + \exp\left(r_\phi(\x,\y_l)\right)
    }\notag\\
    &=\sigma\left(r_\phi(\x,\y_w)-r_\phi(\x,\y_l)\right).
    \label{eq:bradley-terry}
\end{align}
where $\sigma$ is the sigmoid function.
Given $\prefdata$, $r_\phi$ is trained by minimizing the negative log-likelihood of \cref{eq:bradley-terry}:
\begin{align}
    \mathcal L_R(r_\phi)=-\E{(\x,\y_w,\y_l)\sim \prefdata}{\log \sigma(r_\phi(\x,\y_w)-r_\phi(\x,\y_l))}\label{eq:rew-loss}
\end{align}
After a reward model $r_\phi$ is obtained, $r$ is replaced with $r_\phi$ and $J_{r_\phi}(\pi_\theta)$ could be explicitly optimized with online RL algorithms.
We note that there exist cases when a ground-truth reward is available, and thus reward modeling becomes unnecessary~\cite{bertscore,BLEURT,nlpo}. In these cases, the reward function can be directly incorporated into \cref{eq:obj}.
While we acknowledge other actor-critic algorithms can also be feasible~\cite{a3c,sac}, we follow the mainstream work~\cite{llm-rlhf-openai-1,llm-rlhf-openai-2} and focus on PPO~\cite{ppo} for our analysis in this paper.

\textbf{DPO.}
Instead of learning a reward model,
Direct Preference Optimization (DPO)~\cite{dpo} optimizes the policy $\pi_\theta$ over preference data. DPO derived the closed-form solution of \cref{eq:obj}, which reveals the relationship between the reward $r(\x,\y)$ and the optimal language model $\pi^*(\y\mid\x)$:
\begin{equation}
    \pi^*(\y\mid\x)=\frac{1}{Z(\x)}\piref(\y\mid\x)\exp\left(\frac{1}{\beta}r(\x,\y)\right),\label{eq:closeform-sol}
\end{equation}
where $Z(\x)$ is a partition function that only depends on prompt $\x$.
According to \cref{eq:closeform-sol}, if $\pi_\theta$ maximizes $J_{r_\phi}(\pi_\theta)$, the underlying reward can be derived with
\begin{equation}
    r_\phi(\x,\y)=\beta\log \frac{\pi_\theta(\y\mid\x)}{\piref(\y\mid\x)}+C(\x).\label{eq:dpo-reward}
\end{equation}
where $C:\mathcal{X}\rightarrow \mathbb R$ is a scalar function. 
This enables us to
reparameterize \cref{eq:rew-loss} with the policy $\pi_\theta$, and then we can drive the DPO loss that directly optimizes $\pi_\theta$, i.e.,
\begin{footnotesize}
\begin{align}\label{eq:dpo-loss}
    \mathcal L_\mathrm{DPO}&(\pi_\theta)=-\mathbb{E}_{(\x,\y_w,\y_l)\sim\mathcal D}\\
    &
    \left[\log\sigma\left(\beta\left(
    \log\frac{\pi_\theta(\y_w\mid\x)}{\piref(\y_w\mid\x)}-
    \log\frac{\pi_\theta(\y_l\mid\x)}{\piref(\y_l\mid\x)}
    \right)\right)\right]\notag.
\end{align}
\end{footnotesize}
We remark that although \citet{dpo} performs a single-round DPO over the preference dataset, some recent works also adapt DPO to an iterative variant with a learned reward model~\cite{xiong2023iterative,selfrwd-llm}.
We also investigate the performance of iterative DPO.

\section{Understanding the Limitation of DPO}
\label{analysis}
In this section, we demonstrate that DPO may not be superior to PPO. 
Firstly, we theoretically demonstrate issues with the DPO training objective. Secondly, we illustrate that DPO is more susceptible to out-of-distribution (OOD) data through a synthetic example. 
Lastly, through experiments on a real preference dataset, we validate that the performance of DPO can be improved by mitigating the distribution shift between the model outputs and the preference dataset.

\subsection{Theoretical Analysis}
\label{subsec:analysis-dpo}

It is well-known that PPO could exploit potential failures in the learned reward model to achieve high rewards without meeting the actual human preference, often manifested as erroneous~\cite{lewis2017deal} or overly complex outputs~\cite{singhal2023long}. 
We argue that, though DPO avoids reward modeling, DPO has a similar generalization issue. 
In the following theorem, we will show that any solution found by PPO also minimizes the DPO objective \cref{eq:dpo-loss}, and thus, any solution found by PPO that exploits the reward model can also be found by DPO. 
Furthermore, DPO might discover solutions exploiting out-of-distribution data, posing a risk of deviating excessively from the reference policy even when the reference policy aligns well with human preferences.

\begin{table}[]
    \centering
    \begin{tabular}{c|ccc}
    \toprule
         Action & $\y_1$ & $\y_2$ & $\y_3$\\
         \midrule
         $\piref$ & $0.5$ & $0.5$ & $0$ \\
         $D_\mathrm{pref}$ & \multicolumn{3}{c}{$\{(\y_w=\y_1,\y_l=\y_2)\}$} \\
         \midrule
         $\pi_\mathrm{DPO}$ & $0.1$ & $0.0$ & $0.9$ \\
         \midrule
         $\pi_\mathrm{PPO}$ & 1 & 0 & 0 \\
         \bottomrule
    \end{tabular}
    \caption{A state-less counter-example with three actions when DPO can minimize the loss but produce an unexpected policy. PPO will not produce $\pi_\mathrm{DPO}$ because $\piref$ enforces the probability of outputting $\y_3$ is zero.}
    \label{tab:dpo-counterexample}
\end{table}

\begin{theorem}\label{thm:main}
Given a ground-truth reward $r$ and a preference dataset $\mathcal D$, let $\Pi_\mathrm{PPO}$ be the class of policies induced by training reward model $r_\phi$ over $\mathcal D$ and running PPO to optimize $J_{r_\phi}(\theta)$. Let $\Pi_\mathrm{DPO}$ be the class of policies induced by minimizing DPO objective \cref{eq:dpo-loss}. 
We have the following conclusion: $\Pi_\mathrm{PPO}$ is a proper subset of $\Pi_\mathrm{DPO}$.
\end{theorem}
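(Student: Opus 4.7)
The plan is to decompose the claim into two parts: (i) the inclusion $\Pi_\mathrm{PPO} \subseteq \Pi_\mathrm{DPO}$, proved by algebra from the closed-form optimum in \cref{eq:closeform-sol}, and (ii) the strictness of the inclusion, established by instantiating the state-less counter-example of Table~\ref{tab:dpo-counterexample}.

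For part (i), I would begin from the observation that any PPO solution $\pi_\theta$ is, by construction, the KL-regularized optimizer of $J_{r_\phi}$ and therefore satisfies \cref{eq:closeform-sol} with $r$ replaced by $r_\phi$. Rearranging that identity yields exactly the reparameterization $r_\phi(\x,\y) = \beta\log\tfrac{\pi_\theta(\y\mid\x)}{\piref(\y\mid\x)} + C(\x)$ of \cref{eq:dpo-reward}. Plugging this into the reward-modeling loss \cref{eq:rew-loss} causes the prompt-dependent constant $C(\x)$ to cancel inside every preference pair $(\y_w,\y_l)$, and what remains is algebraically equal to $\mathcal L_\mathrm{DPO}(\pi_\theta)$ of \cref{eq:dpo-loss}. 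Since $r_\phi$ was trained to minimize \cref{eq:rew-loss}, the corresponding $\pi_\theta$ simultaneously minimizes \cref{eq:dpo-loss}, which places $\pi_\theta\in\Pi_\mathrm{DPO}$.

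For part (ii), Table~\ref{tab:dpo-counterexample} is used directly. Because $\piref(\y_3) = 0$, the KL regularizer in $J_{r_\phi}$ is infinite for any policy assigning positive mass to $\y_3$; hence every element of $\Pi_\mathrm{PPO}$ is supported on $\{\y_1,\y_2\}$. The DPO loss, in contrast, inspects log-ratios only on preference-labeled responses $\y_1$ and $\y_2$: driving $\pi_\theta(\y_2\mid\x)\to 0$ sends the argument of $\sigma$ to $+\infty$ and the sigmoid to $1$, attaining the infimum of \cref{eq:dpo-loss}, while the leftover probability mass is free to be relocated onto $\y_3$. The displayed $\pi_\mathrm{DPO}$ is one such minimizer, and witnesses $\Pi_\mathrm{DPO}\setminus\Pi_\mathrm{PPO}\neq\emptyset$, yielding the strictness.

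The main obstacle will be pinning down what ``the class of policies induced by'' formally denotes, because the two directions place different demands on this definition. Part (i) needs the reward-model family to be expressive enough that $r_\phi$ truly attains the minimum of \cref{eq:rew-loss} and PPO to be interpreted as returning the exact KL-regularized optimizer; I would state these as standing assumptions so that the algebraic identification of the two losses is unambiguous. Part (ii) requires $\Pi_\mathrm{DPO}$ to contain actual minimizers rather than only infimizing sequences, since the separating policy lives on the boundary where $\log 0$ appears; defining both classes as argmin sets on the extended real line, or perturbing the example slightly, resolves this cleanly. Once these conventions are fixed, both directions are short, and the conceptual content is the one the section advertises: DPO's objective is blind to responses lying outside the support of $\piref$ and $\mathcal D$, which is precisely the OOD failure mode the counter-example exploits.
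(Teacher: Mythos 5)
Your proposal is correct and follows essentially the same route as the paper: the same algebraic identification of $\mathcal L_\mathrm{DPO}$ with the reward-modeling loss via the closed-form solution \cref{eq:closeform-sol} for the inclusion, and the same stateless three-action counter-example of Table~\ref{tab:dpo-counterexample} for strictness. The only presentational difference is that the paper spells out the two-sided inequality $\min_r\mathcal L_R(r)=\min_\pi\mathcal L_\mathrm{DPO}(\pi)$ explicitly (which your argument contains implicitly, since the loss identity holds for every policy, not just PPO solutions), while you are more careful than the paper about the boundary issues ($\log 0$, attainment of the infimum) that the argmin definitions must accommodate.
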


\begin{proof}
We first prove that $\Pi_\mathrm{PPO}$ is a subset of $\Pi_\mathrm{DPO}$, i.e. $\Pi_\mathrm{PPO}\subseteq \Pi_\mathrm{DPO}$.
Let $R$ be the class of reward models that minimizes reward learning loss \cref{eq:rew-loss}. We note there is an one-to-many mapping between $\Pi_\mathrm{PPO}$ and $R$
according to \cref{eq:dpo-reward}. Without loss of generality, we omit the scalar factor $C(\x)$ and define $f$ to be an one-to-one mapping from a policy to a reward function $f(\pi)(\x,\y)=\beta\log\frac{\pi(\y|\x)}{\pi_{ref}(\y|\x)}$.

We will show
that the minimum DPO loss is the same as the minimum reward learning loss, i.e., $\min_r \mathcal L_{R}(r)=\min_\pi \mathcal L_\mathrm{DPO}(\pi)$. We can show that $\min_\pi \mathcal L_\mathrm{DPO}(\pi)=\min_\pi \mathcal L_{R}(f(\pi))\ge \min_r \mathcal L_{R}(r)$. And for a minimizer $r^*$ of $\mathcal L_R(r)$, we can construct a policy $\pi^*$ from $r^*$ by \cref{eq:closeform-sol} and then $\min_\pi \mathcal L_\mathrm{DPO}(\pi)\le \mathcal L_\mathrm{DPO}(\pi^*)=\min_r \mathcal L_{R}(r)$. Therefore the reward learning loss achieved by reward models in $R$ and DPO loss achieved by policies in $\Pi_\mathrm{DPO}$ are the same, i.e. $\forall r_\phi\in R, \pi_\mathrm{DPO} \in \Pi_\mathrm{DPO}, \min_{r}L_{R}(r)=\mathcal L_{R}(r_\phi)=\mathcal L_\mathrm{DPO}(\pi_\mathrm{DPO})=\min_{\pi}L_\mathrm{DPO}(\pi)$. 

For any solution found by PPO, $\pi_\mathrm{PPO}\in \Pi_\mathrm{PPO}$, the reward $r^*=f(\pi_\mathrm{PPO})$ satisfies that $\pi_\mathrm{PPO}$ is a maximizer of $J_{r^*}(\pi)$ and $\pi_\mathrm{PPO}$ can be represented by $r^*$ with 
\begin{equation}
    \pi_{\mathrm{PPO}}(\y\mid\x)=\frac{1}{Z(\x)}\piref(\y\mid\x)\exp\left(\frac{1}{\beta}r^*(\x,\y)\right).\label{eq:inproof-ppo-closeform-sol}
\end{equation}
Substituting $\pi_\mathrm{PPO}$ with \cref{eq:inproof-ppo-closeform-sol} in $\mathcal L_\mathrm{DPO}(\pi_\mathrm{PPO})$, we get $\mathcal L_\mathrm{DPO}(\pi_\mathrm{PPO})=\mathcal L_R(r_\phi)$. Therefore, $\pi_\mathrm{PPO}$ also minimizes the DPO loss, which implies $\pi_\mathrm{PPO}\in \Pi_\mathrm{DPO}$.

Next, we show that $\Pi_\mathrm{PPO}$ is a proper subset of $\Pi_\mathrm{DPO}$, i.e. $\Pi_\mathrm{PPO}\subsetneq \Pi_\mathrm{DPO}$ with a counter-example as shown in \cref{tab:dpo-counterexample}. In this counter-example, we will show that there exists a solution found by DPO, $\pi_\mathrm{DPO}\in\Pi_\mathrm{DPO}$, that does not maximize the RL objective of PPO \cref{eq:obj}.
Consider a simple state-less case with three actions, but the preference dataset only contains a single pair comparison between $\y_1$ and $\y_2$. Denote the probability of DPO policy $\pi_\mathrm{DPO}$ outputting the first two actions as $a$ and $b$. The DPO loss in this scenario is given by $\mathcal{L}_\mathrm{DPO}=\log(1+(\frac{b}{a})^\beta)$, which can be minimized as long as $b=0$. A possible optimal policy produced by DPO is shown in the third row of \cref{tab:dpo-counterexample}, which has a $0.1$ probability to output $\y_1$ and a $0.9$ probability to output $\y_3$. This policy cannot be produced by PPO because $\piref$ enforces $\pi_\mathrm{PPO}$ to assign $0$ probability to $\y_3$ according to \cref{eq:inproof-ppo-closeform-sol}.

\end{proof}

We remark that the root cause of reward misspecification is the narrow distribution coverage of the preference dataset.
The learned reward model may assign a high value to out-of-distribution (OOD) samples and has the potential to be exploited during the RL process.
\textbf{Although DPO avoids training the reward model, it still suffers from the misspecification issue on OOD samples but in a different manner.}
Specifically, DPO can develop
a biased distribution favoring unseen responses, directly
impacting quality of the learned policy.
By contrast, PPO can leverage prompt-only data and generate responses beyond the preference dataset distribution.
During training, KL divergence between $\pi_\theta$ and $\piref$ can provide additional regularization for PPO on these generated samples. 

\begin{figure}
    \centering
    \includegraphics[width=\columnwidth]{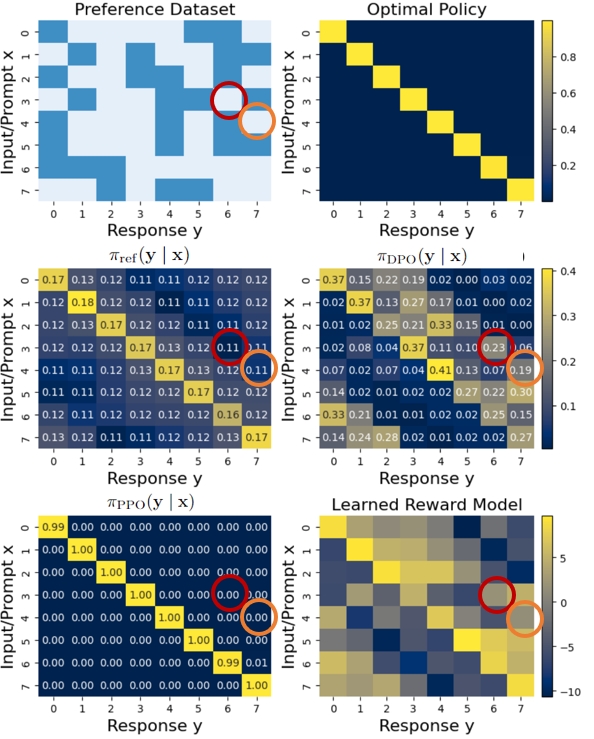}
    \vspace{-7mm}
    \caption{Preference dataset coverage, policy probability distributions of $\piref$, $\pi_\mathrm{PPO}$, $\pi_\mathrm{DPO}$, and the value of learned rewards in the synthetic scenario. In the first figure, dark color represents data present in preference data, while light means the data points are not included.
    Although data points marked with red circles and orange circles are not covered by the preference dataset, DPO assigns higher probabilities of these data points compared with the reference model. PPO assigns low probability to the marked data points and learns the optimal policy.}
    \label{fig:synthetic-heatmap}
    \vspace{-5mm}
\end{figure}

\subsection{Empirical Validation in A Synthetic Scenario}
We design a synthetic scenario to validate \cref{thm:main} in practice.
We create discrete spaces of prompts and responses, both of size 8. The policy $\pi_\theta$ and reward model $r_\phi$ are modeled as MLPs, which take a one-hot vector as input and output a categorical distribution of overall responses.
We manually enforce the optimal response to be diagonal indices.
The preference dataset is randomly created under this constraint and only covers limited preference pairs for each input. 
The resulting policies of DPO and PPO are shown in \cref{fig:synthetic-heatmap}.
We can see that in practice, DPO and the learned reward model can assign high values to the response out of the distribution of preference dataset, which are marked using circles.
In the case of DPO, the final model may assign higher probabilities than the reference model to these responses, which is not desirable as performance improvement on OOD responses could not be guaranteed. For example, in the red circles, DPO increases the probability from 0.11 to 0.23.
In contrast, though the reward model has a similar misspecification issue, PPO can alleviate the issue with explicit KL regularization w.r.t. the reference model.

\textbf{Practical Remark:}
From the analysis in this section, we attempt to provide insights to understand the performance of DPO in practice --- 
\textbf{DPO is prone to generating a biased policy that favors out-of-distribution responses, leading to unpredictable} behaviors.
We will further validate these insights through an experimental study involving LLMs on real preference datasets.

\subsection{Experiments on Real Preference Datasets}
\label{subsec:analysis-dpo-exp}

\newcommand{\sftalpaca}{{\emph{SFT (Alpaca)}}\xspace}
\newcommand{\sftsafe}{{\emph{SFT (Safe)}}\xspace}

In this section, we conduct experiments on real preference datasets and
investigate two aspects that may influence DPO performance, including the base model and preference data used for DPO training.

\paragraph{Experimental Setup.} 
We perform our experimental analysis on \textbf{SafeRLHF} dataset~\cite{dai2023safe}. In this dataset, preference pairs have the form $(\x,\y_1,\y_2,l_\mathrm{h},l_\mathrm{s},b_1,b_2)$, where $l_\mathrm{h}$ and $l_\mathrm{s}$ are a preference labels over $\y_1$ and $\y_2$ in terms of helpfulness and safety, respectively, which could be either 1 or 2. $b_1$ and $b_2$ are binary safety labels of these two responses, which could be positive or negative.
With this dataset, our objective is to train an LLM that prioritizes safety over helpfulness in content generation. Specifically, in constructing the preference dataset, our preference is for the more helpful response when both options are considered safe (i.e., $l_h$ if $b_1$ and $b_2$ are both positive). Otherwise, our preference shifts towards the safer one (i.e., $l_s$).
Following \citet{dai2023safe}, the base model is trained on the Alpaca~\cite{taori2023stanford} open-source dataset with SFT, denoted as \sftalpaca. We use the evaluation models released by the official codebase to evaluate the helpfulness and harmfulness. We remark that these official evaluation models are not involved during the training.
Our experimental study is shown in \cref{tab:ood}.

\renewcommand{\sftalpaca}{{\emph{SFT (Alpaca)}}\xspace}
\renewcommand{\sftsafe}{{\emph{SFT (Safe)}}\xspace}

\begin{table}[ht]
\centering
\small
\begin{tabular}{c|ccc}
\toprule
   & $\Delta$Help. $\uparrow$ & Harm. $\downarrow$  & S.R. $\uparrow$ \\ \midrule
SFT (Alpaca) & -2.62  & 1.50 & 41.6\%  \\
\midrule
PPO & 1.69  & -12.08 & 99.5\%      \\
+ SFT (Safe)& 4.47 &  -12.33 & 99.6\%\\
\midrule
DPO & -4.19 & -0.97  & 55.4\%      \\
+ SFT (Safe) & -1.62 & -3.50  & 71.8\%      \\
+ filter dual-unsafe & 2.46  & -4.88  & 80.8\%      \\ 
+ filter dual-safe & -2.86  & -6.82 & 95.8\%      \\ 
\midrule
 DPO Iter.1 & -3.22 & -5.23 & 86.7\%\\
 DPO Iter.2  & -3.27 & -8.83  & 99.7\% \\
 DPO Iter.3 & -3.26 & -10.21 & 99.9\%\\
 DPO Iter.4 & -2.96 & -11.07 & 99.9\%\\
\bottomrule
\end{tabular}
\caption{The impact of training data on DPO. We first train Llama-2-7B on the Alpaca open-source dataset and obtain \sftalpaca. Then the SFT model is trained with DPO and PPO. DPO performs poorly due to distribution mismatch and noises.
These issues can be resolved by (1) additional SFT on the preference dataset (\sftsafe), (2) filtering out controversy and noisy preference pairs, and (3) generating new responses and using a learned reward model to label the preference data for iterative DPO training.
}
\vspace{-5mm}
\label{tab:ood}
\end{table}
\begin{table*}[ht]
\centering
\begin{tabular}{c|c|ccc|ccc}
\toprule

Task & HH-RLHF & \multicolumn{3}{c|}{APPS} & \multicolumn{3}{c}{CodeContest} \\
\midrule
\multirow{2}{*}{Metric} & \multirow{2}{*}{\makecell{OpenAssaint\\ Reward}} & \multirow{2}{*}{\makecell{Intro.\\ pass@5}} & \multirow{2}{*}{\makecell{Inter.\\ pass@5}} & \multirow{2}{*}{\makecell{Comp.\\ pass@5}} &  \multirow{2}{*}{\makecell{pass@10}} & \multirow{2}{*}{\makecell{pass@100}} & \multirow{2}{*}{\makecell{pass@1k}} \\
& & & & & &\\
\midrule
SFT & 0.532  & 38.6\% &  10.1\% & 3.9\% & 0.9\% & 4.3\% & 12.0\% \\ 
\midrule
baseline PPO  & 0.706 & 18.0\% & 2.4\% & 1.1\% & 4.3\% & 6.0\% & 7.7\% \\
+ Adv.Norm.  & 0.716 & 38.1\%  & 11.4\% & 4.6\% & 6.8\% & 9.4\% & 15.4\% \\
+ Large.Batch.  & 0.716 & 42.3\% & 14.6\% & 7.5\% & 5.1\% & 12.8\% & 19.6\% \\
+ Ref.EMA & \textbf{0.718} & 44.4\% & 18.0\% & 9.1\% & 6.8\% & 13.7\% & 21.4\%  \\
\bottomrule
\end{tabular}
\label{tab:ablation-all}
\caption{Ablation study of PPO on different tasks. Baseline PPO is trained with a batch size of 64. Specifically, for the HH-RLHF task, the base model employed is Llama2-7B. In the case of APPS and CodeContest tasks, the base model utilized is CodeLlama-34B.}
\vspace{-5mm}
\end{table*}

\textbf{Impact of The Base Model.} 
When using \sftalpaca as the base and reference model, we find that DPO performs poorly, producing only a $55.4\%$ safety rate and low helpfulness reward. We hypothesize that this is caused by the distribution shift between the training data of the base model, i.e., the Alpaca dataset, and the preference data, i.e., the SafeRLHF dataset.
To study the impact, we further fine-tune \sftalpaca on the SafeRLHF dataset with safe responses to obtain \sftsafe. We then use \sftsafe as the reference model to re-train DPO from scratch. As shown in \cref{tab:ood}, resolving the distribution shift issue essentially increases the safety rate by $16.4\%$ and the helpfulness reward from $-4.19$ to $-1.62$. 

\textbf{Sensitivity to Preference Data.}
There exist pairs $(\x,\y_1,\y_2)$ in the SafeRLHF dataset where both $\y_1$ and $\y_2$ have the same safety label.
After filtering out the dual-unsafe and dual-safe preference data in the dataset, the trained model could obtain a much higher safety rate. However, filtering the dual-safe preference data would largely hurt the performance of helpfulness. These results suggest that while DPO may derive advantages from eliminating noise or controversies in the training data, excessively discarding high-quality data could be detrimental to DPO performance. 

\newcommand{\dpoiter}{\emph{DPO-Iter}\xspace}

\textbf{Impact of Preference Data Distribution.} While mitigating the distribution shift can be done with additional SFT, we also investigate whether collecting additional data with the base model could bring benefit. 
Specifically, instead of using the existing preference data, we generate new responses with \sftsafe and use a learned reward model for preference labeling. We further repeat this process and iteratively set the reference model as the latest DPO model in the last iteration. We denote this method as \dpoiter.
Remarkably, \dpoiter achieves a comparable safety rate with PPO. This experiment again demonstrates that DPO could be improved by mitigating the distribution shift. However, it also obtains a much lower helpfulness reward compared to PPO.

\begin{figure}[ht]
    \centering
    \includegraphics[width=\linewidth]{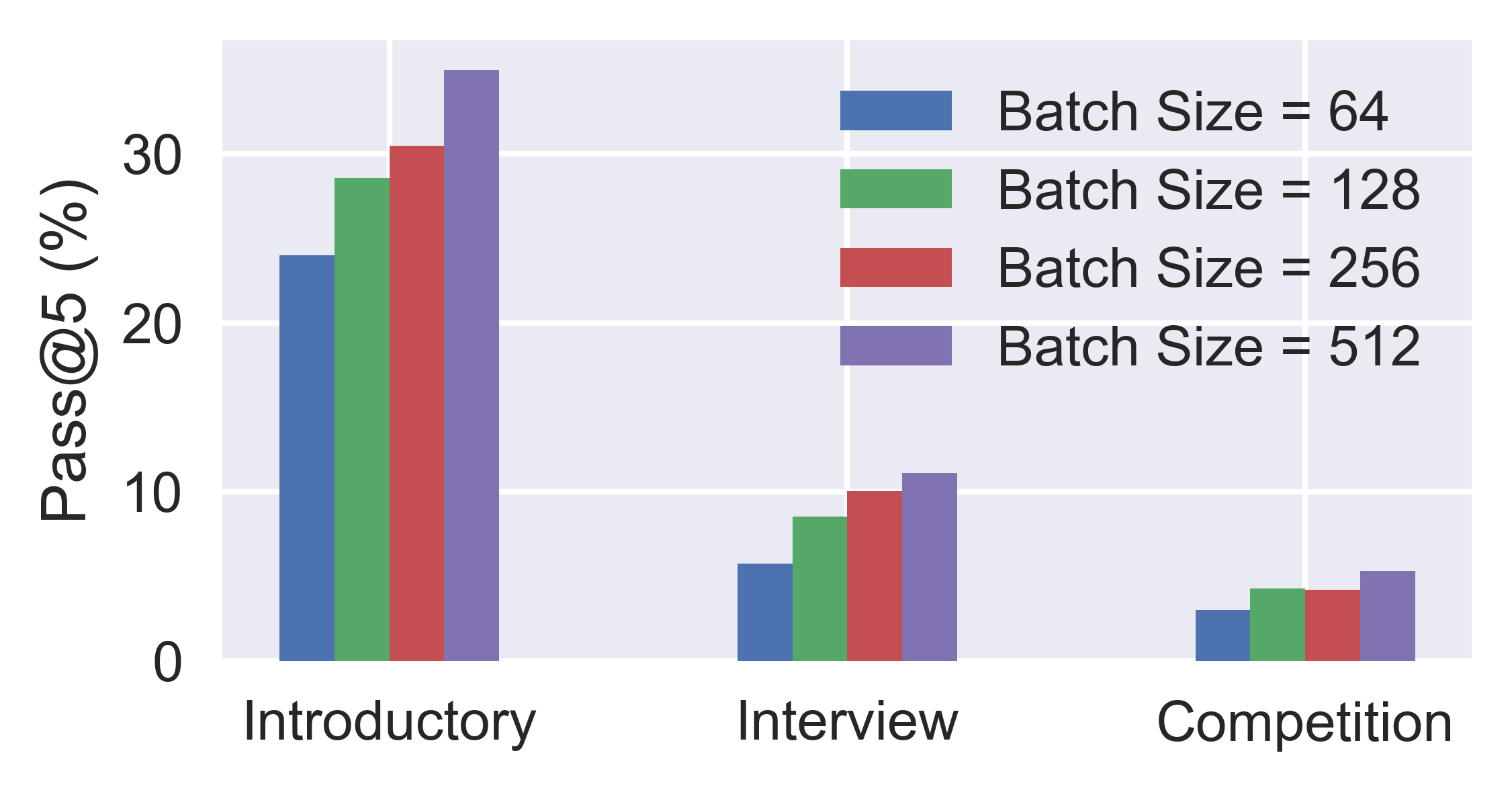}
    \vspace{-5mm}
    \caption{Performance of PPO on APPS dataset under different batch sizes. The base LLM is CodeLlma-13B. ``Introductory'', ``Interview'' and ``Competition'' represent three levels of difficulty.}
    \label{fig:apps-bsz}
\end{figure}

\textbf{Practical Remark:}
The performance of DPO could be improved by mitigating the distribution shift between the model and the preference dataset. To alleviate the issue of distribution shift and noisy data, 
we suggest adopting the iterative DPO method. One should \textbf{carefully} annotate the model-generated samples each time and then proceed to the next round of training. However, we will demonstrate in Sec.~\ref{experiment} that even with a nearly perfect annotator, the performance of DPO remains unsatisfactory in challenging tasks such as code generation.


\section{Key Factors to PPO for RLHF}
\label{analysis-ppo}

In this section, we investigate the key factors to the RLHF performance of PPO. We find three key techniques: (1) advantage normalization~\cite{stable-baselines3}, (2) large-batch-size training~\cite{mappo}, and (3) updating the parameters of the reference model with exponential moving average~\cite{instrgpt}. The first two techniques are widely adopted by the RL community but are not well-studied in the field of RLHF. The third is a technique that has received limited discussion in the literature, involving the gradual update of the reference model through an exponential moving average~\cite{instrgpt}. This particular approach has the potential to yield additional performance enhancements.

\begin{table*}[h!]
\centering
\small
\begin{tabular}{c|c|ccc|ccc}
\toprule
       & OpenAssistant & \multicolumn{3}{c|}{ Tested V.S. Chosen} & \multicolumn{3}{c}{ Tested V.S. SFT} \\
       & \makecell{Reward} & Tested Win $\uparrow$ & Tie & Chosen Win $\downarrow$ &Tested Win $\uparrow$ & Tie & SFT Win $\downarrow$ \\
\midrule
RRHF & 0.523 &28  & 33 & 39 &  29 & 37 & 34 \\
PRO & 0.529 &37  & 26 & 37 & 34 & 33 & 33  \\
DPO & 0.611 &55  & 21 & 24 & 53 & 31 & 16  \\
DPO-Iter & 0.678 & 55 & 18 & 27 & 54 & 33 & 13 \\
PPO & \textbf{0.718} & 57 & 21 & 22 & 58 & 29 & 13 \\
\bottomrule
\end{tabular}
\caption{Results on the HH-RLHF test set. The evaluation metrics include the OpenAssistant rewards and the win rate of models against the chosen responses and SFT model outputs. The OpenAssistant reward model is not used during the training process. Note that DPO is trained on the preference data in the dataset, while Iter. DPO is trained on self-generated responses, using a reward model for labeling.}
\label{tab:hh-main}
\end{table*}

\begin{table}[ht]
\centering
\small
\begin{tabular}{c|ccc}
\toprule
                    & PPO Win & Tie & DPO Win \\ \midrule
PPO V.S. DPO   & 42      & 28  & 30       \\
PPO V.S. DPO-Iter   & 36      & 36  & 28       \\
\bottomrule
\end{tabular}
\caption{On HH-RLHF, we use GPT-4 to compare the outputs of the PPO and DPO models.}
\label{tab:hh-vs}
\end{table}

\paragraph{Implementation Details.} Our PPO implementation is based on DeepSpeed-Chat~\cite{dschat}, except that (1) we use a scalar reward for each response instead of dense rewards assigned on each token and (2) we omit the auxiliary SFT loss during PPO training because of the limited amount of data.
This implementation includes common PPO techniques such as value loss clip and generalized advantage estimation (GAE)~\cite{gae}. We list experiment details in Appendix~\ref{appendix:ppo_details}.

\textbf{Experimental Setup.} 
Our ablation experiments for PPO are carried out on a dialogue task \textbf{HH-RLHF}~\cite{bai2022training} as well as two code generation tasks: \textbf{APPS}~\cite{hendrycks2021measuring} and \textbf{CodeContest}~\cite{li2022competition}. 
HH-RLHF is a preference dataset in the form defined in \cref{prelim} that aims to train a helpful and harmless LLM. 
APPS and CodeContest datasets are competitive programming datasets.
Given a problem, the LLM should output a piece of executable code to solve this problem. The correctness is verified by test cases in the dataset, which can then generate reward signals or preference pairs for PPO and DPO training, respectively. We remark that these two types of tasks feature different types of reward signals: preference and direct reward feedback.
The complete experimental setup is listed in \cref{experiment}. 
In the experiment result, we denote advantage normalization as \emph{Adv. Norm.}, large batch-size training as \emph{LargeBatch} and exponential moving average of reference model update as \emph{Ref. EMA}. 

\textbf{Analysis.}
The result of the ablation study is shown in \cref{tab:ablation-all}. In \cref{tab:ablation-all}, with a small batch size, baseline PPO improves over the SFT model on HH-RLHF and CodeContest dataset but shows significant performance degradation on the APPS dataset. Advantage normalization stabilizes PPO training and improves the performance of PPO. The most significant benefit is brought by using a large batch size, especially on code generation tasks. 
Lastly, using the exponential moving average for the reference model also brings additional benefits.
The intuition behind this is that while the main LLM of PPO is rapidly changing, the reference model should also be updated accordingly. Otherwise, the learned model may be strongly regularized to be close to the SFT model, which can hurt performance in challenging tasks.
\cref{fig:apps-bsz} further demonstrates that increasing the batch size of PPO consistently improves the performance across all difficulty levels in the APPS dataset. 
We also highlight that utilizing a small batch size, such as 64, in PPO training could negatively impact the performance of the base SFT model, resulting in a 33.7\% performance level on the introductory scale.
We remark that our findings are consistent with those developed in the RL community~\cite{mappo}.

\section{Benchmark Results}
\label{experiment}

In this section, we conduct experimental validations to evaluate the performances of both DPO and PPO. Initially, our experiments focus on general dialogue tasks, specifically HH-RLHF and SafeRLHF. The primary goal is to improve the effectiveness of LLM by promoting constructive interactions and mitigating detrimental components within the model. Additionally, our investigation extends to demanding code generation tasks, namely APPS and CodeContest. 

\begin{table}[ht]
\centering
\small
\begin{tabular}{c|c|ccc}
\toprule
LLM & Method & $\Delta$Help. $\uparrow$ & Harm. $\downarrow$  & S.R. $\uparrow$ \\
\midrule
& Beaver & -  & -6.59  & 89.6\%      \\
\midrule
\multirow{4}{*}{\makecell{Llama 1 \\ 7B}}
& SFT  & -2.26 & 0.78 & 46.5\% \\
& DPO & -2.70 & -6.38 & 93.1 \% \\
& DPO-Iter  & -2.79 & \textbf{-11.86 } & \textbf{100.0\%} \\
& PPO & \textbf{+0.66}  & -10.22 & 98.6\%      \\
\midrule
\multirow{4}{*}{\makecell{Llama 2 \\ 7B}}
& SFT & -2.12 & 0.00 & 52.1\% \\
& DPO  & -2.86 & -6.82  & 95.8\% \\
& DPO-Iter    & -2.96 & -11.07  & \textbf{99.9\%} \\
& PPO  & \textbf{+1.69}  & \textbf{-12.08} & 99.5\%      \\
\bottomrule
\end{tabular}
\vspace{-2mm}
\caption{Results on SafeRLHF. ``Beaver'' is the officially released model. ``$\Delta$ Help.'' denotes helpfulness relative to Beaver. ``S.R.'' denotes safety rate. The reported results are based on the official evaluation model.}
\vspace{-3mm}
\label{tab:safe-main}
\end{table}

\begin{table}[ht]
\centering
\small
\begin{tabular}{cc|ccc}
\toprule
Model & Method & Intro. & Inter. & Comp. \\
\midrule
\multirow{1}{*}{\makecell{GPT-Neo 2.7B}}
& SFT & 5.6\% & 0.8\% & 0.0\% \\
Codex 12B & SFT & 9.7\% & 0.5\% & 0.1\% \\
CodeT5 & CodeRL & 16.4\% & 4.9\% & 2.8\% \\
\makecell{AlphaCode 1B} &  5@1k & 14.4\% & 5.6\% & 4.6\% \\
\midrule
\multirow{4}{*}{\makecell{Code Llama\\7B}}
& \multirow{1}{*}{Few shot}
& 10.8\% & 2.0\% & 0.8\% \\
& \multirow{1}{*}{SFT}
& \textbf{30.0\%} & \textbf{7.8\% }& \textbf{2.8\%} \\
& DPO-Iter & 20.9\% & 3.4\% & 1.3\%\\
& \multirow{1}{*}{PPO}
&29.4\% & 7.6\% & 2.4\% \\
\midrule

\multirow{4}{*}{\makecell{Code Llama\\13B}}
& \multirow{1}{*}{Few shot}
& 23.7\% & 5.6\% & 2.1\% \\
& \multirow{1}{*}{SFT}
& 33.7\% & 8.7\% & 3.6\% \\
& DPO-Iter & 33.0\% & 8.0\% & 2.8\%\\
& \multirow{1}{*}{PPO}
&\textbf{36.4\%} & \textbf{11.47\%} & \textbf{4.6\%} \\
\midrule

\multirow{4}{*}{\makecell{Code Llama\\34B}}
& \multirow{1}{*}{Few shot}
& 32.8\% & 8.8\% & 2.9\% \\
& \multirow{1}{*}{SFT}
& 38.6\% &  10.1\% & 3.9\% \\
& DPO-Iter & 34.2\% & 9.3\% & 3.7\% \\
& \multirow{1}{*}{PPO}
&\textbf{44.4\%} & \textbf{18.0\%} & \textbf{9.1\%} \\
\midrule

\end{tabular}
\vspace{-2mm}
\caption{Results on Apps test set. All the numbers are pass@5 except for AlphaCode. Where ``5@1k'' means this model samples 1000 times for each problem and 5 sampled codes that pass the public test cases (in the problem description) are selected to be evaluated on hidden test cases.}
\vspace{-3mm}
\label{tab:main_apps}
\end{table}
\begin{table}[ht]
\centering
\small
\begin{tabular}{cc|cc}
\toprule
     Model & Method & \thead{Valid. Set\\10@1k} & \thead{Test Set\\10@1k}\\
    \midrule
     AlphaCode 9B & - & 16.9\% & 14.3\%\\
     \midrule
     \multirow{2}{*}{\makecell{AlphaCode 41B}}& - & 16.9\% & 15.6\%\\
     & + clustering & 21.0\% & 16.4\%\\
     \midrule
     \multirow{4}{*}{Code Llama 34B} & SFT & 10.3\% & 15.2\% \\
     & DPO & 0.0\% & 0.0\% \\
     & DPO-Iter & 3.5\% & 3.2\% \\
     & PPO & 19.7\% & \textbf{22.4}\% \\
     \bottomrule
\end{tabular}
\caption{Pass rate on CodeContests dataset. ``10@1k'' means that 1000 samples will be evaluated on public tests in the problem description, and only 10 of them will be submitted for hidden tests. We \textbf{only used Python for solving problems}, while AlphaCode used both Python and C++.}
\vspace{-5mm}
\label{tab:ccs-baseline}
\end{table}


\textbf{HH-RLHF}~\cite{bai2022training} dataset consists of human preferences on AI assistant responses, encompassing 170k comparisons. In this dataset, we conduct experiments based on Llama2-7B.
We evaluate the trained models using the OpenAssistant reward model\footnote{https://huggingface.co/OpenAssistant/oasst-rm-2-pythia-6.9b-epoch-1}. Note that this model is only used for evaluation and is not involved during training. In addition, we adopt GPT-4 to compare the responses of different models. The prompt and evaluation details are listed in Appenidx~\ref{appendix:gpt4-eval}.

As shown in \cref{tab:hh-main}, except DPO and PPO, we also investigate other alignment methods such as RRHF~\cite{rrhf} and PRO~\cite{pro}. The results demonstrate that PPO and DPO are much more preferred by GPT-4 than the chosen responses in the dataset and SFT model outputs, outperforming RRHF and PRO across all metrics. In this paper, we focus more on the performance of DPO and PPO. We observe that DPO-Iter performs better than DPO but worse than PPO. PPO consistently achieves a higher reward and higher win rates. We also use GPT-4 to compare the outputs of DPO and PPO directly, and the results are listed in \cref{tab:hh-vs}, which demonstrates that GPT-4 prefers the responses of PPO.

\textbf{SafeRLHF}~\cite{dai2023safe} dataset comprises over 30k entries of expert comparison data. Each entry in this dataset contains two responses to a question.
In our experiments, we consolidate two preferences as mentioned in Section~\ref{subsec:analysis-dpo-exp}.
For evaluation, we borrow the official reward model and cost model\footnote{https://github.com/PKU-Alignment/safe-rlhf}, which are trained to evaluate helpfulness and harmlessness, respectively. 

The results on SafeRLHF are listed in Tab~\ref{tab:safe-main}. 
Experiments indicate that after alignment, both DPO and PPO can generate responses with less harm, while PPO's responses are more helpful.

\textbf{APPS}~\cite{hendrycks2021measuring} is a description-to-code
generation benchmark from competitive programming platforms. For each question, there are also test cases to verify the accuracy of generated codes.
We use these test cases in the training set to provide feedback.
For PPO training, the feedback could be directly used as a reward. We simply define the reward as 10 if the generated code passes \textbf{all} test cases. Otherwise, the reward is 0. 
For DPO, since there are no preference pairs, we adopt DPO-Iter.
Specifically, we use the base model to sample 5 codes for each prompt and utilize the test cases to label the correctness of generated codes. 
It is worth noting that for many prompts, the base model may fail to sample any correct answer. In such cases, we use the correct solutions from the dataset as $\mathbf{y}_w$. We evaluate the results using pass@k, which is defined as the proportion of problems successfully solved by employing k generated programs for each problem.

As shown in \cref{tab:main_apps}. We conduct experiments on different model sizes. In particular, when using CodeLlama-34B as the base model, we achieved state-of-the-art results on the APPS dataset. 
We can observe that DPO-Iter fails to improve the SFT model performances on all the model sizes. In contrast, for PPO, as the model size increases, the improvement is more apparent. We remark that the feedback using test cases is nearly perfect. However, the performance of DPO-Iter remains unsatisfactory.

\textbf{CodeContest}~\cite{li2022competition} is a more challenging competitive programming dataset consisting of several programming languages. Here, we only use Python code. We adopt a similar way to train PPO as in the APPS dataset. For DPO training, we construct the preference dataset by using the correct and incorrect codes provided by the dataset. 
To compare with previous work, we adopt k@n to evaluate the generated code, which means that n samples will be evaluated on public tests in the problem description, and k of them will be submitted for hidden tests.

The results are listed in \cref{tab:ccs-baseline}. We obtained similar conclusions as in APPS. PPO improves the SFT model significantly, while DPO fails to generate any correct codes. After one epoch of training, the code written by the DPO model has achieved a pass rate of 0, we observe that the DPO model outputs many meaningless code snippets.
The results also demonstrate that DPO-Iter performs worse compared to SFT.
With the assistance of PPO, CodeLlama-34B has surpassed the previous state-of-the-art on this task, outperforming Alphacode with 41 billion parameters.

\section{Conclusion}
\label{conclusion}
In this paper, we uncover the fundamental limitations of DPO and explore critical factors that enhance the practical performance of PPO in RLHF. Through theoretical and experimental analysis, we explore the limitations of DPO and find that DPO is sensitive to the distribution shift between the base model outputs and preference data. We suggest that iterative DPO is better than training on static data. However, we also find that DPO fails to improve the performance on challenging tasks such as code generation. 
Moreover, according to the ablation study, we summarize the key factors for PPO training, including advantage normalization, large batch size, and updating the parameters of the reference model with an exponential moving average. With our practical tuning guideline, PPO demonstrates robust effectiveness across diverse tasks and achieves state-of-the-art results in challenging code competition tasks.

There are also limitations in our work. The reward model is significant in the training processes of both PPO and DPO-Iter. However, in this paper, we have not delved into the discussion of how to effectively train a robust reward model. For the code competition task, we utilize the ground-truth reward for PPO training and the labeling of DPO-Iter. However, this does not affect the conclusions drawn in our paper, and we leave it as future works.

\newpage
\section*{Impact Statements}

 Our study investigates a critical challenge in aligning Large Language Models (LLMs) with human preferences, emphasizing its societal impact, including the elimination of bias and the reduction of unfairness. The use of a public dataset ensures transparency, mitigating concerns related to privacy and ethical considerations. This research emphasizes our dedication to responsible AI practices, aiming to improve societal well-being by aligning LLMs with human values while upholding rigorous standards for privacy and ethics.



\nocite{langley00}

\bibliography{example_paper}
\bibliographystyle{icml2024}
\newpage
\appendix
\onecolumn
\section{Implementation Details}

\subsection{DPO Details}
For DPO training, we use $\beta$ = 0.1  with a learning rate of 1e-6. We sweep the batch size and report the best performance. For HH-RLHF and SafeRLHF, we train DPO for two epochs. For code generation tasks, we train DPO for a single epoch, since it has led to a deterioration in performance.

\subsection{PPO Details}
\label{appendix:ppo_details}
During the PPO training phase, we separate the parameters of actor and critic, and set the learning rate to 1e-5 for the actor model and 5e-6 for the critic model. By default, we set the global batch size as 512, and 512 roll-out samples are split into 4 mini-batches to update the actor and critic models.
We configure the sampling parameters to include a temperature of 1.0 and a top-k value of 200. 
The advantage estimation parameter $\lambda$ in GAE and the RL discount factor $\gamma$ are fixed at 1. We set the KL penalty coefficient $\beta$ as 0.1, with a clipping value of 20 for reward scores. We additionally adopt advantage normalization and value normalization to stabilize the training.

For HH-RLHF and SafeRLHF, we set the maximum generated tokens as 256 and adopted PPO training for 5 epochs. 
For APPS and CodeContest, we set the maximum generated tokens as 1024, and adopt PPO training for 16 epochs.
The checkpoints with the highest reward/pass@k on the validation sets are selected.

\section{GPT-4 Evaluation}
\label{appendix:gpt4-eval}
We adopt the same evaluation prompt with \cite{dpo}. The prompt is :
\begin{verbatim}
For the following query to a chatbot, which response is more helpful?

Query: <the user query>

Response A:
<either the test method or baseline>

Response B:
<the other response>

FIRST provide a one-sentence comparison of the two responses and explain \
which you feel is more helpful. SECOND, on a new line, state only "A" or \
"B" to indicate which response is more helpful. Your response should use \
the format:
Comparison: <one-sentence comparison and explanation>
More helpful: <"A" or "B">
\end{verbatim}

When using GPT-4 for evaluation, we randomly sampled 100 queries from the test set. And ask GPT-4 to compare the two responses. To minimize the impact of response position on comparison, we swapped the positions of the two responses and evaluated them separately. If the results of the two evaluations are inconsistent, we set the final result as a ``Tie''.

\section{Additional Experiments}

\begin{table}[ht]
\centering
\small
\begin{tabular}{c|c|c}
\toprule
Method & Reference Model & Avg. Pass@5\\
\midrule
DPO& Codellama-13B-Pretrain & 0.24\% \\
DPO& Codellama-13B-SFT & 12.8\% \\
\midrule
PPO& Codellama-13B-Pretrain & 13.8\% \\
PPO& Codellama-13B-SFT & 15.1\% \\
\bottomrule
\end{tabular}
\caption{Results of changing the reference model on APPS dataset. Codellama-13B-SFT is closer to the preference dataset than Codellama-13B-Pretrain. DPO is more affected by the distribution shift than PPO.}
\label{tab:apps-reference}
\end{table}
\begin{table}[ht]
\centering
\small
\begin{tabular}{cc|ccc}
\toprule
Method & Reference Model & $\Delta$Help. $\uparrow$ & Harm. $\downarrow$  & S.R. $\uparrow$ \\
\midrule
DPO & Llama2-7B-SFT(Alpaca) & -4.19 & -0.97 & 55.4\% \\
DPO & Llama2-7B-SFT (Safe) & -1.62 & -3.5 & 71.8\% \\
\midrule
PPO & Llama2-7B-SFT(Alpaca) & 1.69 & -12.08 & 99.5\% \\
PPO & Llama2-7B-SFT (Safe) & 4.47 & -12.33 & 99.6\% \\
\bottomrule
\end{tabular}
\caption{Results of changing the reference model on Safe-RLHF dataset. Llama2-7B-SFT(Safe) is closer to the preference dataset than Llama2-7B-SFT(Alpaca). DPO is more affected by the distribution shift than PPO.}
\label{tab:safe-reference}
\end{table}
\subsection{Varying the Reference Model}
We conduct experiments to assess the impact of distribution shift by varying the reference model. The results are listed in Table~\ref{tab:apps-reference} and Table~\ref{tab:safe-reference}. Llama2-7B-SFT(Safe) and Codellama13B-SFT are models that are closer to the preference dataset in the Safe-RLHF and APPS dataset, respectively. The results indicate that DPO is more affected by the distribution shift than PPO.

\subsection{Varying $\beta$}
In Table~\ref{tab:ablation-beta}, We explore the impact of $\beta$ on the HH-RLHF and APPS datasets. On the HH-RLHF dataset, we evaluate the model using the OpenAssistant reward metric. On the APPS dataset, we report the average pass@5 score.
The results indicate that having too large $\beta$ may harm the performance of both DPO and PPO. A $\beta$ value of 0.1 consistently performs well across various models and tasks.
\begin{table}[ht]
\centering
\small
\begin{tabular}{c|cccc}
\toprule
$\beta$ & 0 & 0.05 & 0.1 (default) & 0.2 \\
\midrule
HH-RLHF, Llama-7B & & & &  \\
\midrule
PPO & 0.705 & 0.720 & 0.718 & 0.629 \\
DPO & N/A & 0.609 & 0.611 & 0.597 \\
\midrule
APPS, Codellama-13B & & & & \\
\midrule
PPO & 13.0\% & 14.1\% & 15.1\%  & 14.9\% \\
DPO & N/A & 12.56\% & 12.0\% & 12.32\% \\
\bottomrule
\end{tabular}
\caption{Results of changing the $\beta$ parameter on HH-RLHF and APPS dataset. The results indicate that having too large $\beta$ may harm the performance of both DPO and PPO. A $\beta$ value of 0.1 consistently performs well across various models and tasks.}
\label{tab:ablation-beta}
\end{table}

\subsection{Varying Preference Dataset}
\begin{table}[ht]
\centering
\small
\begin{tabular}{c|cc}
\toprule
Preference Dataset & helpful-base set & full set (default) \\
\midrule
SFT & N/A & 0.532 \\
PPO & 0.602 & 0.718 \\
DPO & 0.544 & 0.615 \\
\bottomrule
\end{tabular}
\caption{Results of changing the coverage level of preference dataset on HH-RLHF dataset. When training on the helpful-base subset, the performance of DPO has dropped to be similar to that of the SFT model.}
\label{tab:ablation-coverage}
\end{table}
We train the model on a subset of the HH-RLHF preference dataset. The results are shown in Table~\ref{tab:ablation-coverage}.    The results suggest that the performance of both PPO and DPO may be affected by the extent of coverage in the preference dataset. When training on the helpful-base subset, the performance of DPO has dropped to be similar to that of the SFT model.

We also evaluate PPO on the HH-RLHF dataset by filtering dual-unsafe and dual-safe preference pairs. The results are listed in Table~\ref{tab:filter}. We observe that PPO could also be affected by the composition of the preference dataset. Overall, PPO maintains a safe rate of over 92\% cross all the settings, while DPO is more affected by the preference dataset.

When filtering dual-unsafe samples, the PPO model achieves significantly higher helpfulness rewards. We hypothesize that it is because the reward model can discern helpfulness at a more nuanced level. Upon further filtering of dual-safe samples, we observe that the model becomes conservative, often declining to respond to questions altogether. This phenomenon occurs because, after filtering both dual-unsafe and dual-safe samples, the reward model focuses solely on safety. And refusing to respond could always be a safe option.

\renewcommand{\sftalpaca}{{\emph{SFT (Alpaca)}}\xspace}
\renewcommand{\sftsafe}{{\emph{SFT (Safe)}}\xspace}

\begin{table}[ht]
\centering
\small
\begin{tabular}{c|ccc}
\toprule
   & $\Delta$Help. $\uparrow$ & Harm. $\downarrow$  & S.R. $\uparrow$ \\ \midrule
PPO & 1.69 & -12.08 & 99.5\% \\
+ filter dual-unsafe & 5.88 & -9.12 & 92.6\% \\
+ filter dual-safe & -8.04 & -4.51 & 94.9\% \\
\midrule
DPO & -1.62 & -3.50  & 71.8\%      \\
+ filter dual-unsafe & 2.46  & -4.88  & 80.8\%      \\ 
+ filter dual-safe & -2.86  & -6.82 & 95.8\%      \\ 
\bottomrule
\end{tabular}
\caption{The impact of filtering dual-safe and dual-unsafe training data on PPO and DPO on the Safe-RLHF dataset.}
\label{tab:filter}
\end{table}

\subsection{Human Evaluation}
We also include human evaluation to validate the preference-based tasks. The results are listed in Table~\ref{tab:human-eval}. We ensure that each reference pairs are evaluated by 4 different persons.
Human agree with GPT-4 evaluations at a rate of 60\% and 61\%, respectively. According to human evaluation results, PPO outperforms both DPO and DPO-Iter.
\begin{table}[ht]
\centering
\small
\begin{tabular}{ccccc}
\toprule
 & PPO win & Tie & DPO win & GPT4-Human agree \%\\
\midrule
PPO V.S. DPO & 45 & 26 & 29 & 60 \\
PPO V.S. DPO-iter & 38 & 29 & 33 & 61 \\
\bottomrule
\end{tabular}
\caption{Results of human evaluation on HH-RLHF dataset. GPT-4  agrees with human evaluations at a rate of 60\% and 61\%, respectively. According to human evaluation results, PPO outperforms both DPO and DPO-Iter.}
\label{tab:human-eval}
\end{table}






\end{document}